\newtheorem{proposition}{Proposition}[section]
\newtheorem{theorem}[proposition]{Theorem}
\newcommand{\bH}{\mathbb{H}}
\newcommand{\cL}{\mathcal{L}}
\newcommand{\R}{\mathbb{R}}
\newcommand{\bE}{\mathbb{E}}
\newcommand{\cD}{\mathcal{D}}
\newcommand{\beq}{\begin{equation}}
\newcommand{\eeq}{\end{equation}}
\newcommand{\lra}{\longrightarrow}
\newcommand{\rank}{\mathrm{rank}}
\newcommand{\Span}{\mathrm{span}}
\newcommand{\KL}{\mathrm{KL}}
\DeclareMathOperator{\sign}{sign}
\DeclareMathOperator{\Tr}{Tr}
\icmltitlerunning{Model-centric Data Manifold: the Data Through the Eyes of the Model}
\begin{document}

\twocolumn[
\icmltitle{Model-centric Data Manifold: the Data Through the Eyes of the Model}



\icmlsetsymbol{equal}{*}


\centerline{Luca Grementieri, Zuru Tech, Modena, Italy, Rita Fioresi,
University of Bologna, Italy} 


\icmlcorrespondingauthor{Luca Grementieri}{luca.g@zuru.tech}
\icmlcorrespondingauthor{Rita Fioresi}{rita.fioresi@unibo.it}

\icmlkeywords{Deep Learning, Information Geometry, Data Manifold, Fisher Matrix}

\vskip 0.3in
]




\begin{abstract}
We discover that deep ReLU neural network classifiers can see a
low-dimensional Riemannian manifold structure on data. Such
structure comes via the {\sl local data matrix}, a variation of
the Fisher information matrix, where the role of
the model parameters is taken by the data variables. We obtain
a foliation of the data domain and we show that the dataset on
which the model is trained lies on a leaf, the {\sl data leaf},
whose dimension is bounded by the number of classification
labels. We validate our results with some experiments with the
MNIST dataset: paths on the data leaf connect valid
images, while other leaves cover noisy images.
\end{abstract}

\section{Introduction}
\label{introduction}

In machine learning, models are categorized as discriminative models or generative models. From its inception, deep learning has focused on classification and discriminative models \citep{krizhevsky2012imagenet, hinton2012deep, collobert2011natural}.
Another perspective came with the construction of generative models based on neural networks \citep{kingma2013auto, goodfellow2014generative, van2016conditional, kingma2018glow}.
Both kinds of models give us information about the data and the similarity between examples.
In particular, generative models introduce a geometric structure on generated data. Such models transform a random low-dimensional vector to an example sampled from a probability distribution approximating the one of the training dataset. As proved by \citet{arjovsky}, generated data lie on a countable union of manifolds.
This fact supports the human intuition that data have a low-dimensional manifold structure, but in generative models the dimension of such a manifold is usually a hyper-parameter fixed by the experimenter.
A recent algorithm by \citet{peebles2020hessian} provides a way to find an approximation of the number of dimensions of the data manifold, deactivating irrelevant dimensions in a GAN (see
\citet{dollar}.

Similarly, here we try to understand if a discriminative model can be used to detect a manifold structure on the space containing data and to provide tools to navigate this manifold.
The implicit definition of such a manifold and the possibility to trace paths between points on the manifold can open many possible applications.
In particular, we could use paths to define a system of coordinates on the manifold (more specifically on a chart of the manifold).
Such coordinates would immediately give us a low-dimensional parametrization of our data, allowing us to do dimensionality reduction.

In supervised learning, a model is trained on a labeled dataset to identify the correct label on unseen data.
A trained neural network classifier builds a hierarchy of representations that encodes increasingly complex features of the input data \citep{olah2017feature}.
Through the representation function, a distance (e.g. euclidean or cosine) on the representation space of a layer endows input data with a distance.
This pyramid of distances on examples is increasingly class-aware:
the deeper is the layer, the better the metric reflects the similarity of data according to the task at hand.
This observation suggests that the model is implicitly organizing the data according to a suitable structure. 

Unfortunately, these intermediate representations and metrics are insufficient to understand the geometric structure of data.
First of all, representation functions are not invertible, so we cannot recover the original example from its intermediate representation or interpolate between data points.
Moreover, the domain of representation functions is the entire data domain $\R^n$.
This domain is mostly composed of meaningless noise and data occupy only a thin region inside of it. So, even if representation functions provide us a distance, those metrics are incapable of distinguishing between meaningful data and noise.

We find out that a ReLU neural network implicitly identifies a low-dimensional submanifold of the data domain that contains real data.
We prove that if the activation function is piecewise-linear (e.g. ReLU), the neural network decomposes the data domain $\R^n$ as the disjoint union of submanifolds
(the leaves of a foliation, using the terminology of differential geometry).
The dimension of every submanifold (every leaf of the foliation) is bounded by the number of classes of our classification model,
so it is much smaller than $n$, the dimension of the data domain $\R^n$.
Our main theoretical result, Theorem \ref{mainthm}, stems from the study of the properties of a variant of the Fisher Information matrix, the {\sl local data matrix}.
However, Theorem \ref{mainthm} cannot tell us which leaves of this foliation are meaningful or useful for practical applications:
the interpretation of this geometric structure can only come from experiments.
We report experiments performed on MNIST dataset.
We choose to focus on MNIST because it is easily interpretable.

\vskip 0.05in
\begin{figure}[ht]
\begin{center}
\centerline{\includegraphics[width=\columnwidth]{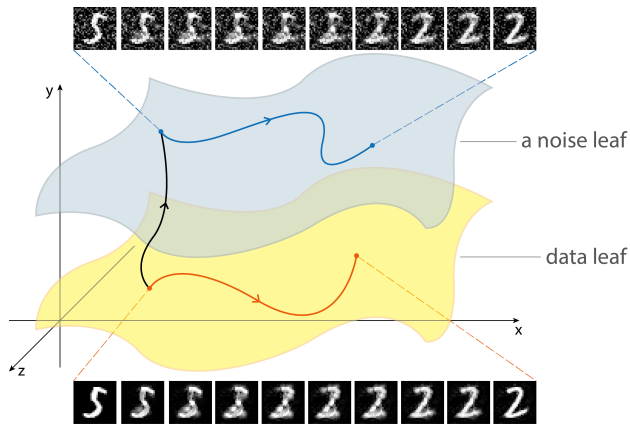}}
\vskip 0.05in
\caption{We provide algorithms to move along a leaf and orthogonally to it. In our experiments, we observe that moving orthogonally to the leaves changes the amount of noise in images. On the other hand, a path on a leaf is able to transform an image into a different one preserving the amount of noise of the starting image.}
\label{fig:summary}
\end{center}
\vskip -0.2in
\end{figure}

Our experiments suggest that all valid data points lie on only one leaf of the foliation, the {\sl data leaf}.
To observe this phenomenon we take an example from the dataset and we try to connect it with another random example following a path along the leaf containing the starting point.
If such a path exists, it means that the destination example belongs to the same leaf of the foliation; otherwise, the two
data points belongs to different leaves.
\ref{fig:summary} pictures a summary of our experiments.

Visualizing the intermediate points on these joining paths,
we see that 
the low-dimensional data manifold defined by the model is not the anthropocentric data manifold composed of data meaningful for a human observer.
The model-centric data manifold comprises images that do not belong to a precise class. The model needs those transition points to connect points with different labels. At the same time, it understands that such transition points represent an ambiguous digit: on such points, the model assigns a low probability to every class.

The experiments also show that moving orthogonally to the data leaf we find noisy images.
That means that the other leaves of the foliation contain images with a level of noise that increases
with the distance from the data leaf. 
These noisy images become soon meaningless to the human eye, while the model still classifies them
with high confidence. This fact is a consequence of the property of the local data matrix: equation (\ref{kl-g}) prescribes that the model output does not change if we move in a direction orthogonal to the tangent space of the leaf on which our data is located.

This remark points us to other possible applications of the model-centric data manifold.
We could project a noisy point on the data leaf to perform denoising, or we could use the distance from the
data leaf to recognize out-of-distribution examples.
Such applications require further research on the
characterization of the model-centric data manifold.

The main contributions of the paper are: \\
1. the definition of the local data matrix $G(x,w)$ at a point $x$ of the data domain and
for a given model $w$, and the study of its properties; \\
2. the proof that the subspace spanned by the eigenvectors with non-zero eigenvalue of
the local data matrix $G(x,w)$ can be interpreted as the tangent space of a Riemannian manifold, whose dimension is bounded by the number of classes on which our model is trained; \\
3. the identification and visualization of the model-centric data manifold through paths, obtained via experiments on MNIST.

\par{\bf Organization of the paper}.
In Section \ref{ig-sec}, we review the fundamentals of information
geometry using a novel perspective that aims at facilitating the comprehension of the key concepts of the paper. We introduce the {\sl local data matrix} $G(x,w)$ and we summarize its properties in Prop. \ref{prop1}.
In Section \ref{model-sec}, we show that, through the local data matrix,
under some mild hypotheses, the data domain foliates as a disjoint union of leaves, which are all Riemannian submanifolds of $\R^n$, with metric given via $G(x,w)$.
In Section \ref{exp-sec}, we provide evidence that all our dataset lies on one
leaf of the foliation and that moving along directions orthogonal to the data leaf
amounts to adding noise to data. 

\section{Information Geometry} \label{ig-sec}

Here we collect some results pertaining to information geometry 
\citep{amari, nielsen}, using a novel perspective adapted
to our question, namely how to provide a manifold structure to the
space containing data.
 
Let $p(y|x,w)$ be a discrete probability distribution on 
$C$ classification labels, i.e.
$p(y|x,w)=(p_i(y|x,w))_{i=1,\ldots, C}$, $x \in \Sigma \subset \R^n$, $w \in \R^d$. 
In the applications, $x$ represent input data belonging to
a certain dataset $\Sigma$, while $w$ are the learning parameters, 
i.e. the parameters of the empirical model.
As we are going to see in our discussion later on, it is fruitful to
treat the two sets of variables $x$ and $w$ on equal grounds. 
This will naturally lead to a geometric structure on a low
dimensional submanifold of $\R^n$, that we can navigate through paths joining points in the dataset $\Sigma$ (see Section \ref{exp-sec}).

In order to give some context to our treatment,
we define, following \citet{amari} Section 3,
the \textit{information loss} $I(x,w)= - \log(p(y|x,w))$ 
and the \textit{loss function} $L(x,w)=\bE_{y\sim q}[I(x,w)]$. Typically
$L(x,w)$ is used for practical optimizations, where we need
to compare the model output distribution $p(y|x,w)$
with a certain known true distribution $q(y|x)$.
We may also view $L(x,w)$ as the Kullback-Leibler divergence
between $p(y|x,w)$ and $q(y|x)$
up to the constant $-\sum_i q_i(y|x) \log q_i(y|x)$,
irrelevant for any optimization problem:
\begin{align}
& L(x,w) =\bE_{y \sim q}[- \log(p(y|x,w))]= \nonumber \\
&= \sum_{i=1}^C q_i(y|x) \log  \frac{q_i(y|x)}{p_i(y|x,w)} -
\sum_{i=1}^C q_i(y|x) \log q_i(y|x)= \nonumber \\
&=\KL(q(y|x)||p(y|x,w)) -
\sum_{i=1}^C q_i(y|x) \log q_i(y|x).
\end{align}

A popular choice for $p(y|x,w)$ in deep learning classification algorithms is
\beq\label{prob}
p_i(y|x,w)=\text{softmax}(s(x,w))_i = \frac{e^{s_i(x,w)}}{\sum_{j=1}^{C} e^{s_j(x,w)}}, 
\eeq
where $s(x,w) \in \R^C$ is a score function determined by parameters $w$.
From such $p(y|x,w)$
we derive the cross-entropy with softmax loss function:
\begin{align}
L(x,w)&=\bE_{y\sim q}[I(x,w)]=\bE_{y\sim q}[-\log p(y|x,w)]= \nonumber \\
&= - s_{y_x}(x,w) + \log{\sum_{j=1}^C e^{s_j(x,w)}}, \label{softmax}
\end{align}
where $L(x,w)$ is computed with respect to the probability mass
distribution $q(y|x)$ assigning $1$ to the correct label 
$y_x$ of our datum $x$
and zero otherwise. 

Going back to the general setting, 
notice that
\begin{align}
\label{eq:expected_value}
& \bE_{y\sim p}[\nabla_w I(x,w))]=\sum_{i=1}^C p_i(y|x,w) \nabla_w \log p_i(y|x,w)= \nonumber \\
&= \! \sum_{i=1}^C \nabla_w p_i(y|x,w)= 
\nabla_w \! \! \left(\sum_{i=1}^C p_i(y|x,w)\!\right) \! = \! \nabla_w 1 = 0.
\end{align}

Let us now
define the following two matrices:
\begin{align}
& F(x,w) = \bE_{y\sim p}[\nabla_w\log p(y|x,w) \cdot (\nabla_w \log p(y|x,w))^T] \label{fisher} \\
& G(x,w) = \bE_{y\sim p}[\nabla_x\log p(y|x,w) \cdot (\nabla_x \log p(y|x,w))^T]. \label{metric-g}
\end{align}
We call $F(x,w)$ the \textit{local Fisher matrix} at the datum $x$
and $G(x,w)$ the \textit{local data matrix} given the model $w$.
The Fisher matrix \citep{amari} is obtained as $F(w)=\bE_{x \sim \Sigma}[F(x,w)]$
and it gives information on the metric structure 
of the space of parameters. 
Similarly, we can reverse our perspective and see how $G(x,w)$ 
allows us to recognize some structure in our dataset. 

The following observations apply to both
$F(x,w)$ and $G(x,w)$ and provide the theoretical cornerstone 
of Section \ref{model-sec}.

\begin{proposition} \label{prop1}
Let the notation be as above.
Then:
\begin{enumerate}
\item \label{thm:positive} $F(x, w)$ and $G(x, w)$ are positive semidefinite symmetric matrices.

\item \label{thm:ker} $\ker F(x,w)= (\Span_{i=1, \ldots, C}\{
\nabla_w \log p_i(y|x,w)\})^\perp$;\\
$\ker G(x,w)= (\Span_{i=1, \ldots, C}\{
\nabla_x \log p_i(y|x,w)\})^\perp$.

\item \label{thm:rk} $\rank\ F(x,w) < C$, \quad $\rank\ G(x,w) < C$.
\end{enumerate}
\end{proposition}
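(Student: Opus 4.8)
The plan is to treat $F(x,w)$ and $G(x,w)$ simultaneously, since the two matrices are formally identical after swapping the gradient $\nabla_w$ for $\nabla_x$. I will run the argument for $F$ with the shorthand $v_i := \nabla_w \log p_i(y|x,w)$, and the statements for $G$ will follow verbatim with $v_i := \nabla_x \log p_i(y|x,w)$. Because $y$ ranges over the $C$ discrete labels, the expectation $\bE_{y\sim p}$ is a finite weighted sum, so I would first rewrite
\begin{equation*}
F(x,w) = \sum_{i=1}^C p_i(y|x,w)\, v_i v_i^T .
\end{equation*}
Each $v_i v_i^T$ is symmetric positive semidefinite of rank one and each weight satisfies $p_i(y|x,w)\ge 0$; a nonnegative combination of symmetric positive semidefinite matrices is again symmetric positive semidefinite, which settles item \ref{thm:positive}.

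For item \ref{thm:ker} I would use that, for a positive semidefinite matrix, $u\in\ker F(x,w)$ holds if and only if the associated quadratic form vanishes. Expanding gives
\begin{equation*}
u^T F(x,w)\, u = \sum_{i=1}^C p_i(y|x,w)\, (v_i^T u)^2 ,
\end{equation*}
a sum of nonnegative terms. Invoking that the softmax parametrization (\ref{prob}) forces $p_i(y|x,w)>0$ for every $i$, this sum vanishes exactly when $v_i^T u = 0$ for all $i$, i.e. when $u \perp \Span_{i}\{v_i\}$. Hence $\ker F(x,w) = (\Span_{i}\{v_i\})^\perp$, as claimed. I expect this to be the step that deserves the most care: strict positivity of the $p_i$ is the one genuinely necessary hypothesis, since without it the kernel could be strictly larger, so this is precisely what ties the statement to the softmax (or any everywhere-positive) model.

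Finally, item \ref{thm:rk} would follow from the linear dependence already recorded in (\ref{eq:expected_value}). That computation yields $\sum_{i=1}^C p_i(y|x,w)\, v_i = 0$ for the $w$-gradients, and the identical one-line manipulation — differentiating the identity $\sum_i p_i(y|x,w) = 1$ in $x$ — gives $\sum_{i=1}^C p_i(y|x,w)\, \nabla_x \log p_i(y|x,w) = 0$ for $G$. Since all coefficients $p_i(y|x,w)$ are strictly positive, this is a nontrivial relation among the $C$ vectors $v_1,\dots,v_C$, so $\dim \Span_{i}\{v_i\}\le C-1$. Because $F(x,w)$ is symmetric, its range is $(\ker F(x,w))^\perp = \Span_{i}\{v_i\}$ by item \ref{thm:ker}, whence $\rank F(x,w) = \dim \Span_{i}\{v_i\} \le C-1 < C$, and likewise for $G(x,w)$. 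Apart from checking the $\nabla_x$ analogue of (\ref{eq:expected_value}), every step is elementary linear algebra over the finitely many labels.
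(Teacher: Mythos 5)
Your proof is correct and follows essentially the same route as the paper's: symmetry and positive semidefiniteness from the weighted-sum-of-rank-one decomposition, the kernel characterization via vanishing of the quadratic form, and $\rank < C$ from the linear dependence $\sum_{i=1}^C p_i \nabla \log p_i = 0$ obtained by differentiating $\sum_i p_i = 1$. The only (welcome) difference is that you explicitly flag the strict positivity $p_i(y|x,w) > 0$ of the softmax as the hypothesis making each term of the vanishing sum individually zero, a point the paper's proof uses implicitly.
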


\begin{proof} See Appendix \ref{app-sec2}.
\end{proof}

This result tells us that the rank of both $F(x,w)$ and $G(x,w)$ is bounded by $C$,
the number of classes in our classification problem.
The bound on $\rank\ G(x,w)$ will allow us to define a submanifold
of $\R^n$ of dimension $\rank\ G(x,w)$ (Section \ref{model-sec}),
that our experiments show contains our dataset (Section \ref{exp-sec}).
In practical situations, this dimension is much lower than
the size of $G(x, w)$, i.e. the input size $n$, as shown in Table \ref{table:g-bound}.

\vskip -0.1in
\begin{table}[ht]
\caption{Bound on the rank of $G(x,w)$ for popular image classification tasks.}
\label{table:g-bound}
\begin{center}
\begin{small}
\begin{sc}
\begin{tabular}{ccc}
\toprule
Dataset  & $G(x, w)$ size & $\rank\ G(x, w)$ bound \\
\midrule
MNIST & 784 & 10 \\
CIFAR-10 & 3072 & 10 \\
CIFAR-100 & 3072 & 100 \\
ImageNet & 150528 & 1000 \\
\bottomrule
\end{tabular}
\end{sc}
\end{small}
\end{center}
\vskip -0.1in
\end{table}

\section{The Model View on the Data Manifold}\label{model-sec}

We now turn to examine some properties of 
the matrices $F(x,w)$ and $G(x,w)$ that will enable us to discover a submanifold structure
on the portion of $\R^n$ occupied by our dataset (see Section \ref{exp-sec}) and to prove
the claims at the end of the above section.

We recall that, given a perturbation of
the weights $w$, 
the Kullback-Leibler divergence, 
gives, to second-order approximation, the following formula:
\begin{align}
&\KL(p(y|x,w+\delta w)||p(y|x,w)) = \nonumber \\
&= (\delta w)^T F(x,w) (\delta w) 
+\mathcal{O}(||\delta w||^3)\label{kl-f}.
\end{align}

Equation (\ref{kl-f}), together with  Proposition \ref{prop1},
effectively expresses the fact that during SGD dynamics with
a mini-batch of size 1, we have only a very limited number
of directions, namely $C - 1$, in which the change $\delta w$ affects the loss. 

Taking the expectation with respect to $x \sim \Sigma$ on both sides of the equation, we obtain the analogous property for the Fisher matrix $F(w)=\bE_{x \sim \Sigma}[F(x,w)]$.
While $F(x,w)$ has a low rank, the rank of $F(w)$ is bounded
by $|\Sigma|(C-1)$, a number that is often higher than the size of $F(w)$.
Thus $F(w)$, when non-degenerate, is an effective metric
on the parameter space. It allows us to measure, according
to a certain step $\delta w$, when we reach
a stable predicted probability and thus the end of
model training (see \citet{martens} and refs. within). 

It must be however noted that $F(w)$
retains its information content only
{\sl away} from the trained model, that is, well before the
end of the training phase (see \citet{as}  
for an empirical validation of such statements). We are going
to see, with our experiments in Section \ref{exp-sec}, that a 
similar phenomenon occurs for $G(x,w)$.

We now turn to the local data matrix $G(x,w)$, thus interpreting
equation (\ref{kl-f}) in the data domain $\R^n$. For a 
perturbation $\delta x$ of the data $x$, we have, up to second
order approximation: 
\begin{align}
&\KL(p(y|x+\delta x,w)||p(y|x,w)) = \nonumber \\
&= (\delta x)^T G(x,w) (\delta x) 
+\mathcal{O}(||\delta x||^3) \label{kl-g}.
\end{align}

This equation tells us that, if we move along the directions
of $\ker G(x,w)$, the probability  distribution
$p(y|x,w)$ is constant (up to a second order approximation) while our data is changing.
Those are the vast majority of the directions,
since $\rank\ G(x,w)<C$ and typically $C \ll n$, hence, we interpret them
as the \textit{noise directions}.
On the other hand, if we move from a data point $x$ along the directions in 
$(\ker G(x,w))^\perp$, data will change along with the probability
distribution $p(y|x,w)$ associated with it. These are 
the directions going toward data points that the model classifies with confidence.

Equation (\ref{kl-g}) suggests to view $G(x,w)$ as a metric
on the data domain. However, because of its low rank
(see Proposition \ref{prop1}), we need to restrict
our attention to the subspace $(\ker G(x,w))^\perp$,
where $G(x,w)$ is non-degenerate. $G(x,w)$ allows us to define
a \textit{distribution} $\cD$ on $\R^n$. In general, 
in differential geometry, we call distribution on $\R^n$
an assignment:
$$
x \mapsto \cD_x \subset \R^n, \qquad \forall x \in \R^n
$$
where $\cD_x$ is a vector subspace of $\R^n$ of a fixed dimension
$k$ (see Appendix \ref{app-sec1} for more details on this
notion in the general context).

Assume now $G(x,w)$ has constant rank; as we shall see in our 
experiments, this is the case for a non fully trained model.
We thus obtain a \textit{distribution} $\cD$:
\beq\label{distr-g}
x \mapsto \cD_x =(\ker G(x,w))^\perp \subset \R^n.
\eeq

We now would like to see if our distribution (\ref{distr-g})
defines a \textit{foliation structure}.
This means that we can decompose $\R^n$ as the disjoint union
of submanifolds, called \textit{leaves}
of the foliation, and there is a {\sl unique} submanifold (leaf) going through each point $x$
(see Figure \ref{app-fig1} in Appendix \ref{app-sec1}, where the distribution is generated by the vector field $X$ and $\R^2$ is the disjoint union of circles, the leaves of the foliation). The distribution comes
into the play, because it gives the tangent space to the leaf through $x$:
$\cD_x= (\ker G(x,w))^\perp$.
In this way, moving along the directions in $\cD_x$ at each point $x$,
will produce a path lying in one of the submanifolds (leaves) of the foliation.  

The existence of a foliation, whose leaf through a point $x$
has tangent space $\cD_x$, comes through Frobenius theorem,
which we state in the Appendix \ref{app-sec1} and we recall
here in the version that we need.

{\bf Frobenius Theorem}. \textit{Let $x \in \R^n$ and
let $\cD$ be a distribution in $\R^n$. Assume that
in a neighbourhood $U$ of $x$:
\beq\label{inv-prop}
[X,Y] \in \cD, \qquad \forall\ X,Y \in \cD.
\eeq
Then, there exists a (local) submanifold $N \subset \R^n$, $x \in N$,
such that $T_zN=\cD_z$, for all $z \in N$.}

It is not reasonable to expect that a general classifier satisfies
the involutive property (\ref{inv-prop}), however it is remarkable
that for a large class of classifiers, namely deep ReLU neural networks, this
is the case, with $p$ given by softmax as in equation (\ref{prob}). 
\begin{theorem} \label{mainthm}
Let $w$ be the weights of a deep ReLU neural network classifier,
$p$ given by softmax, $G(x,w)$ the local data matrix. Assume $G(x,w)$ has constant
rank. 
Then, there exists a local submanifold $N \subset \R^n$, $x \in N$,
such that its tangent space at $z$, $T_z N=(\ker G(z,w))^\perp$ 
for all $z \in N$. 
\end{theorem}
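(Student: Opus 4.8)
The plan is to verify the involutivity hypothesis (\ref{inv-prop}) of the Frobenius theorem for the distribution $\cD_x = (\ker G(x,w))^\perp$ and then invoke Frobenius directly. The first step is to rewrite the distribution in a workable form. By Proposition \ref{prop1}, item \ref{thm:ker}, we have $\ker G(x,w) = (\Span_{i}\{\nabla_x \log p_i(y|x,w)\})^\perp$, so that
\beq
\cD_x = (\ker G(x,w))^\perp = \Span_{i=1,\ldots,C}\{\nabla_x \log p_i(y|x,w)\}.
\eeq
The whole question thus reduces to understanding the span of the score-gradients $\nabla_x \log p_i$.

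The key structural input is that $w$ parametrizes a deep ReLU network, so $s(x,w)$ is piecewise-linear in $x$: the data domain $\R^n$ is partitioned into convex linear regions on whose interior every activation pattern is fixed and $s_i(x,w) = a_i^T x + b_i$ is affine, with $a_i = \nabla_x s_i$ constant. Writing $\log p_i = s_i - \log \sum_j e^{s_j}$ and differentiating, I get
\beq
\nabla_x \log p_i(y|x,w) = a_i - \sum_{j=1}^C p_j(y|x,w)\, a_j.
\eeq
Although the individual vectors still depend on $x$ through the probabilities $p_j$, their span does not: since $\sum_j p_j = 1$, one checks the two inclusions giving $\Span_i\{a_i - \sum_j p_j a_j\} = \Span_{i,j}\{a_i - a_j\}$, a subspace determined solely by the constant vectors $a_1,\ldots,a_C$. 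Therefore, on the interior of each linear region, the distribution $\cD_x$ is a \emph{constant} subspace $V \subset \R^n$, and the constant-rank assumption guarantees that $\dim \cD_x = \rank\ G(x,w)$ equals the fixed number $k = \dim V$ expected of a genuine distribution.

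Once $\cD$ is known to be locally constant, involutivity is immediate: for any two vector fields $X, Y$ taking values in the fixed subspace $V$, the Lie bracket $[X,Y]$ again takes values in $V$, so (\ref{inv-prop}) holds on a neighbourhood of any $x$ in the interior of a linear region. Frobenius then produces the desired local integral submanifold $N$ with $T_z N = \cD_z = (\ker G(z,w))^\perp$; concretely $N$ is a piece of the affine subspace $x + V$, for which the matching $T_z N = V = (\ker G(z,w))^\perp$ is clear since $N$ stays inside the region.

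The one point demanding care---and the natural main obstacle---is that a ReLU network is only piecewise smooth, so the argument above applies cleanly only away from the measure-zero union of region boundaries, where the activation pattern, and hence $V$, can jump. I would therefore state the conclusion for $x$ in the interior of a linear region, where the distribution is smooth and constant; this suffices for the \emph{local} claim of the theorem, and the constant-rank hypothesis is exactly what rules out degenerate changes of $\dim \cD_x$ that would otherwise obstruct treating $\cD$ as a bona fide distribution.
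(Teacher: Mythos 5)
Your proof is correct, but it takes a genuinely different route from the paper's. The paper verifies involutivity head-on: it computes the Lie bracket $[\nabla_x\log p_i,\nabla_x\log p_j]$ in terms of Hessians, uses the softmax identity $\nabla_x p_i=\sum_k p_i(\delta_{ik}-p_k)\nabla_x s_k$ together with $\bH(s_k)=0$ on the linear regions to reduce to $\bH(\log p_i)=-\sum_{k=1}^C\nabla_x p_k(\nabla_x s_k)^T$, and concludes that the bracket is a linear combination of the $\nabla_x\log p_k$, so Frobenius applies. You instead prove the stronger statement that $\cD$ is \emph{locally constant}: on the interior of an activation region, $\nabla_x\log p_i=a_i-\sum_j p_j a_j$ with $a_i=\nabla_x s_i$ constant, and $\Span_i\{a_i-\sum_j p_j a_j\}=\Span_{i,j}\{a_i-a_j\}=:V$, where your two inclusions use only $\sum_j p_j=1$, so the $x$-dependence through the softmax weights drops out of the span. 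Involutivity is then trivial, and Frobenius is barely needed, since the integral manifold is exhibited explicitly as a neighbourhood of $x$ in the affine slice $x+V$. This buys more than the paper's argument shows: the foliation is piecewise affine, with leaves that can bend only where they cross activation-region boundaries, and the rank bound $\dim\cD_x\le C-1$ becomes transparent because $V$ is spanned by the differences $a_i-a_j$. What the paper's bracket computation buys is generality of method: it is the calculation one would have to run if the span were \emph{not} constant on a region---for instance with smooth, non-piecewise-linear activations, where the surviving $\bH(s_k)$ terms show exactly how involutivity can fail, consistent with the paper's remark that a general classifier need not satisfy (\ref{inv-prop}). Your treatment of the nondifferentiability locus matches the paper's own (its appendix version of the theorem assumes $x$ generic and restricts to a neighbourhood where the $\nabla_x s_k$ exist), and your closing remark on the constant-rank hypothesis is accurate: within a region interior constancy of $\dim\cD_x$ is automatic, and the hypothesis is really needed only so that $\cD$ is a distribution of fixed rank across regions.
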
 

\begin{proof}
See Appendix \ref{app-sec2}.
\end{proof}

Through the application of Frobenius theorem, Theorem \ref{mainthm} gives
us a foliation of the data domain $\R^n$. $\R^n$ decomposes
into the disjoint union of $C-1$ dimensional submanifolds, whose tangent
space at a fixed $x \in \R^n$ is $\cD_x=(\ker G(x,w))^\perp$ 
(see Appendix \ref{app-sec1}).
Every point $x$ determines a unique 
submanifold corresponding to the leaf of the foliation through $x$. We may extend this local submanifold  structure to obtain a global structure of manifold on a leaf, still retaining the above property regarding the distribution. 

As we shall see in Section \ref{exp-sec}, we can move from a point $x$ in our dataset $\Sigma$ to another point $x'$ also in $\Sigma$
with an \textit{horizontal} path, that is a path tangent to
$\cD_x$, hence lying on the leaf of $x$ and $x'$.
Our experiments show that we can connect every pair of points $(x, x') \in \Sigma \times \Sigma$ with horizontal paths. It means that all the dataset belongs
to a single leaf, which we call the \textit{data leaf} $\cL$.
Our model,  through the local data matrix $G(x,w)$,
enables us to move on the low-dimensional 
submanifold $\cL$, to which all of our
dataset belongs. Of course not all
the points of $\cL$ correspond to elements of the dataset; however
as we show in the experiments, on most points of $\cL$ the model gives
prediction compatible with human observers.

We also notice that each leaf of our foliation
comes naturally equipped with a metric
given at each point by the matrix $G(x,w)$, restricted to the
subspace $(\ker G(x,w))^\perp$, which coincides with the tangent space to the leaf, where  $G(x,w)$ is non-degenerate. Hence $G(x,w)$ will provide each leaf with a natural Riemannian manifold structure.
We end this section with an observation, comparing our approach
to the geometry of the data domain, with the parameter space.

\par{\bf Remark}.
Equation (\ref{kl-f}) provides a metric to the parameter space $\R^d$,
motivating our approach to the data domain.
For each $w \in \R^d$, we can define, as we did for the data domain, a 
distribution $w \mapsto \cD'_w:=(\ker F(w))^\perp$, using the Fisher matrix.
However, it is easy to see empirically that this 
distribution is {\sl not involutive}, i.e. there is no foliation 
and no submanifold corresponding to it. 

\section{Experiments}\label{exp-sec}

We performed experiments on the MNIST dataset \citep{lbbh} and on the CIFAR-10 dataset
\citep{krizhevsky2010cifar}. 
We report in this section the experiments on the MNIST dataset only, because they are easier to interpret in the geometrical framework (foliation and leaves) introduced in our previous section.
CIFAR-10 experiments are reported in the dedicated Appendix \ref{app-sec3}.

All the following experiments use the same simple CNN classifier trained on MNIST.
The neural network is similar to LeNet, with 32 and 64 channels in the
two convolutional layers and 128 hidden units in the fully-connected layer.
The architecture is identical to the one proposed
in the official MNIST example of the PyTorch framework \citep{pytorch}.
The network has ReLU activation function, so it satisfies the hypothesis of Theorem \ref{mainthm}. 
We train this network with SGD, a batch size of 60 and a fixed learning rate of 0.01.

\subsection{Rank and Trace of the Local Data Matrix}
The definition of distribution in Section \ref{model-sec} and
the consequent results require the rank of $G(x, w)$ to be constant
on every point $x$ for a certain parameter configuration $w$.
We remark that the rank of $G(x, w)$ can be at most $C-1$ (Proposition \ref{thm:rk}),
i.e. much lower than the size of $G(x, w)$ (Table \ref{table:g-bound}).

To check if our assumption is satisfied by a model,
we should calculate the rank of several local data matrices varying the data point $x$.
Unfortunately, it is very difficult to establish the rank of a matrix in numerical experiments,
because the rank is not robust with respect to small perturbations.
Knowing that the rank correspond to the $\ell_0$ norm of the eigenvalues
of a matrix, we can consider the $\ell_1$ norm of the eigenvalues as
a soft version of the rank.
For a positive semidefinite symmetric matrix like $G(x, w)$ (see Proposition \ref{prop1}.\ref{thm:positive}),
the $\ell_1$ norm of the eigenvalues corresponds to the trace of the matrix.

We can suppose that at the beginning of the training $\rank\ G(x, w) = C - 1$ for almost all $x \in \R^n$,
because the vectors $\nabla_x \log p_i(y|x, w)$ with $i=1,\ldots, C$ point toward random directions.
So, if the mean trace measured for a certain parameter configuration $w$ is above the value measured
at the beginning of the training, we can be confident that the number of non-null eigenvalues in $G(x, w)$
has not changed.
When the mean trace goes below the reference starting value during training,
we can expect a non-constant value of $\rank\ G(x, w)$ because for some points
$x$ the model has converged to its final prediction, while other points require
more training step to reach convergence.

\begin{figure}[ht]
\begin{center}
\centerline{\includegraphics[width=\columnwidth]{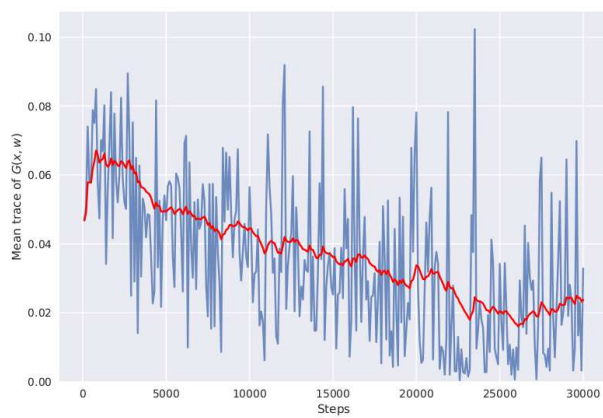}}
\vskip -0.1in
\caption{The blue line is the mean trace of $G(x, w)$ for a batch of images during training.
The red line represent the trend of the mean trace obtained through an exponential moving average.}
\label{fig:trace}
\end{center}
\vskip -0.2in
\end{figure}

\begin{figure*}[ht]
\begin{center}
\includegraphics[width=\textwidth]{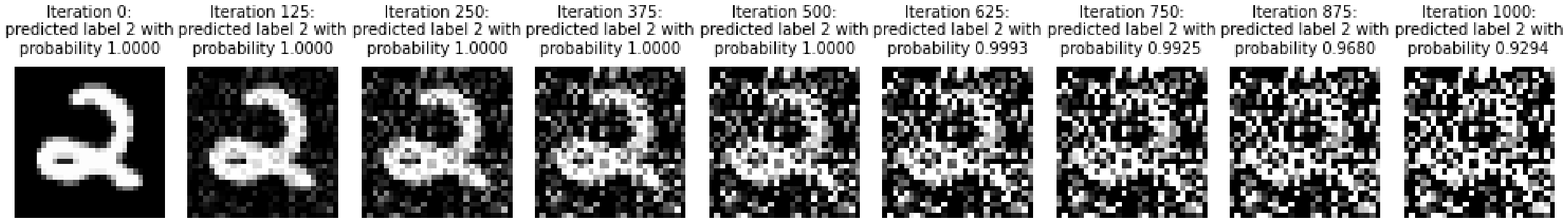}
\hfill \vspace*{-1em}
\includegraphics[width=\textwidth]{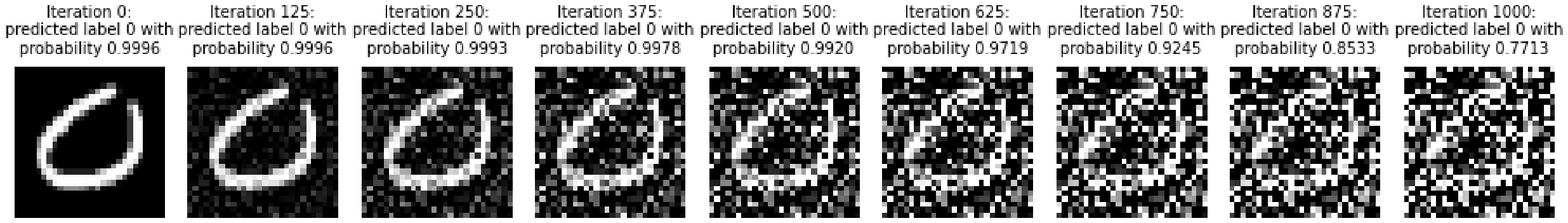}
\end{center}
\vskip -0.1in
\caption{Paths across leaves of the foliation. Notice how the images become increasingly indistinguishable from noise,
while the classifier continues to assign a high probably to the class of the initial image.}
\label{fig:noise}
\vskip -0.1in
\end{figure*}

Figure \ref{fig:trace} shows the trend of the mean trace of $G(x, w)$ for a batch during the training.
The plot is similar to the ones reported in \citet{as} for the Fisher information matrix and it tells us that the local
data matrix $G(x, w)$ loses its informative content at the end of training as well.
Since
\begin{align}
&\Tr G(x, w)\! =\! \Tr \bE_{y\sim p}[\nabla_x\!  \log p(y|x) \! \cdot \! (\nabla_x\! \log p(y|x))^T] \! = \nonumber \\
&= \bE_{y\sim p}[\Tr(\nabla_x\log p(y|x,w) \cdot (\nabla_x \log p(y|x,w))^T)] = \nonumber \\
&= \bE_{y\sim p}[||\nabla_x\log p(y|x,w)||^2],
\end{align}
we could expect such a decreasing trend. In fact a small perturbation
of the data point $x$ should not influence the prediction of a fully-trained model,
thus $||\nabla_x\log p_i(y|x,w)||^2$ should be almost null for every class $i$ at the end of training.

All these observations motivate us to perform subsequent experiments on a partially trained model.
In particular, we use the checkpoint at step 10000, i.e. at the end of the 10\textsuperscript{th} epoch.
At this step, the smoothed mean trace of $G(x, w)$ is very similar to the one measured at the
beginning of the training, so the rank of the local data matrix should be $C-1$ on almost
every point.

\subsection{Characterization of Leaves} \label{characterization-subsec}
Since our neural network classifier satisfies all the hypothesis of Theorem \ref{mainthm},
it views the data domain $\R^n$ as a {\sl foliation}.
Nevertheless, this result does not give us any clue about the characterization of leaves.
We know, however, that, moving away from a data point $x$ while remaining on the same leaf, we can obtain points with different labels, while moving in a direction orthogonal to the tangent space to the leaf of $x$, we obtain points with the same label as $x$, as long as the estimate (\ref{kl-g}) holds.  

To understand the distinguishing factors of a leaf,
we move from a data point $x$ across leaves and we inspect the crossed data points.
More specifically, we start from an image in MNIST test set and
we let it evolve moving orthogonally to the tangent space
$\mathcal{D}_{x_t} = (\ker\ G(x_t, w))^\perp$.
Since the dimension of the orthogonal space is $n - C + 1$, there are many possible directions.
Thus, to monotonically increase the distance from the starting point $x$,
we use a fixed random direction and at every step we project it on $\ker G(x_t, w)$,
i.e on the orthogonal space of the tangent space of current leaf.

We observe in Figure \ref{fig:noise}, that the noise in the images increases steadily.
Moreover, we can see that, as predicted by equation (\ref{kl-g}), model predictions remain very certain even when the digit is indistinguishable for the human eye.
This experiment makes us speculate that a leaf is characterized by a constant amount of noise.
If that is the case, all valid data should reside on the same leaf characterized by the absence of noise, the {\sl data leaf}.

\subsection{Horizontal Paths on Leaves}
We follow horizontal paths 
connecting two images from MNIST test set. If it is possible to join two inputs with a horizontal path, then we know that those points are on the same leaf. In our case a horizontal path is tangent to the distribution $\cD$ described in the previous section, i.e.
$\cD_x=(\ker G(x,w))^\perp = \Span_{i=1, \ldots, C}\{
\nabla_w \log p_i(y|x,w)\}$ from Proposition \ref{thm:ker}.
We remark that $\cD_x$ coincides with the row space of the Jacobian matrix $\mathrm{Jac}_x \log p(y|x, w)$. We use this equivalence to compute the projection of a vector on the distribution $\cD_x$.
\vskip -0.1in
\begin{algorithm}[H] 
\caption{Find a horizontal path between points}
\label{alg:path}
\begin{algorithmic}
\STATE {\bfseries Input:} source $s$, destination $d$, step size $\alpha$, number of iterations $T$, model parameters $w$
\STATE $x_0 = s$
\FOR{$t=1,\ldots, T$}
\STATE \# Calculate Jacobian of $\log p(y|x_{t-1}, w)$ w.r.t. $x_{t-1}$
\STATE $j = \mathrm{Jac}_x \log p(y|x_{t-1}, w)$
\STATE \# Project the gradient of $||d -x_{t-1}||^2$ on $\cD_{x_{t-1}}$)
\STATE $v = \text{projection}(d - x_{t-1}, j)$
\STATE \# Update $x_{t-1}$ to obtain the next point $x_{t}$ on the path
\STATE $x_{t} = x_{t-1} + \alpha \frac{v}{||v||}$
\ENDFOR
\STATE {\bfseries Output:} $\{x_t\}_{t=0,\ldots, T}$
\end{algorithmic}
\end{algorithm}
\vskip -0.1in

\begin{figure*}[tb]
\begin{center}
\includegraphics[width=\textwidth]{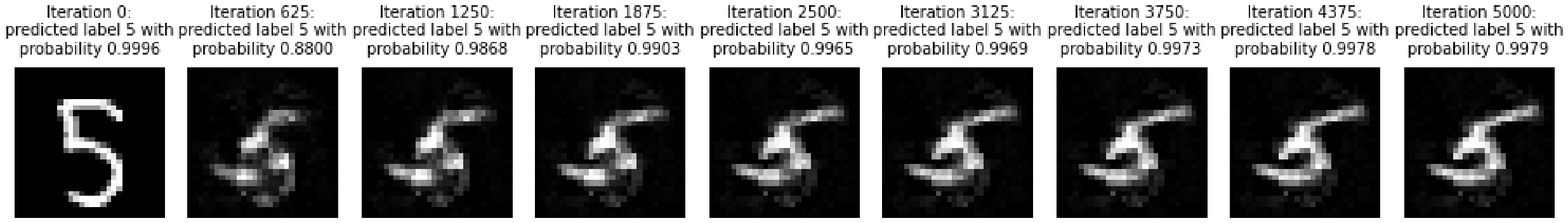}
\hfill \vspace*{-1em}
\includegraphics[width=\textwidth]{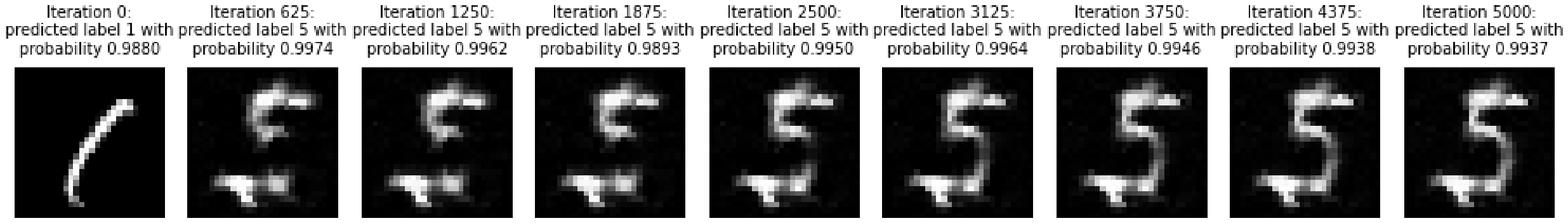}
\end{center}
\vskip -0.3in
\caption{Horizontal paths between two images in MNIST test set. Here, the paths require at most 5000 steps,
different source-destination pairs can need more steps and a longer path. The sequences show that
a horizontal path is very different from an interpolation.}
\label{fig:horizontal}
\end{figure*}

\begin{figure*}[tb]
\begin{center}
\includegraphics[width=\textwidth]{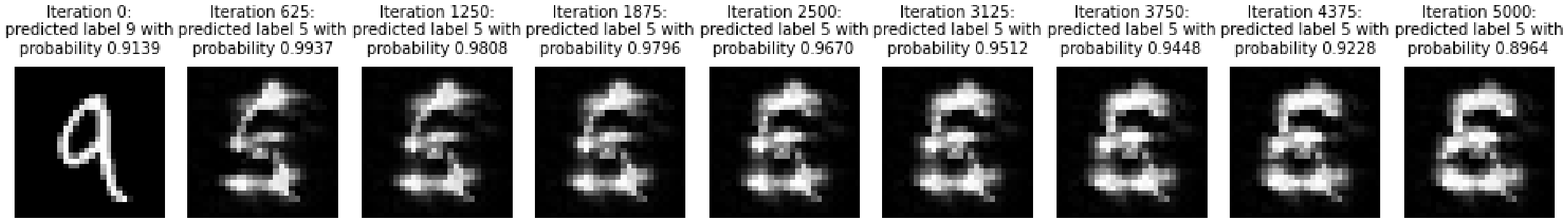}
\hfill \vspace*{-1em}
\includegraphics[width=\textwidth]{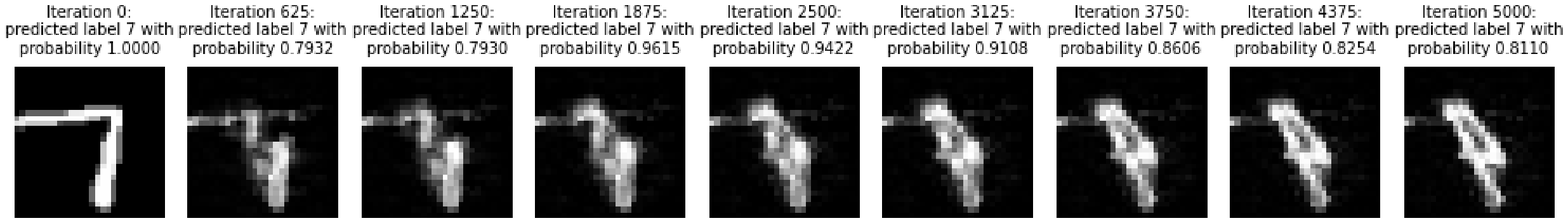}
\end{center}
\vskip -0.1in
\caption{Horizontal paths between a valid image and a mirrored image from MNIST test set.
The mirrored images are chosen to resemble letters (E and P).
Notice how in the first sequence the path passes through the digit 5 before reaching the destination point.}
\label{fig:letter}
\vskip -0.1in
\end{figure*}

Algorithm \ref{alg:path} finds an approximate horizontal path from a source point to a destination point.
The algorithm is obtained as a simplification of Riemannian gradient descent \citep{gabay1982minimizing}
using the squared euclidean distance from the destination point as loss function.
Since we do not have an explicit characterization of the leaves, we cannot perform the retraction step commonly used in optimization algorithms on manifolds. To circumvent this problem, we normalize the gradient vector $v$ and use a small step size $\alpha = 0.1$ in our experiments. The normalization assures that the norm of the displacement at every step is controlled by the step size.

Figure \ref{fig:horizontal} shows the results of the application of Algorithm \ref{alg:path} on some pairs of images in MNIST test set. We observe that it is possible to link different images with an horizontal path, confirming our conjecture about the existence of the data leaf.
This experiment shows that the model sees the data on the same manifold, namely one leaf of the foliation determined by our distribution $\cD$.

The observation of the paths on the data leaf gives us a novel point of view on the generalization property of the model.
First of all, while we could expect to find all training data on the same leaf, it is remarkable that test data are placed on the same leaf too.
Furthermore, we observe that the data leaf is not limited to digits and transition points between them. In fact, we can find paths connecting valid images with images of non-existent digits similar to letters, as shown in Figure \ref{fig:letter}.
This fact suggests that the model-centric data manifold is somewhat more general than the classification task on which the model is trained. 

Our final experiment gives us another remarkable confirmation of our theoretical findings:
it is impossible to link with a horizontal path a noisy image outside the data leaf and a valid data image.
A noisy image is generated using the same strategy presented in Section \ref{characterization-subsec},
i.e. modifying a valid image along a direction in $\ker G(x, w)$.
Figure \ref{fig:noise_horizontal} shows that even after 10000 iterations of Algorithm \ref{alg:path},
it is impossible to converge to the valid destination point.
Indeed, the noise is preserved along the path and the noise pattern stabilizes after 5000 iterations.
The final point reached is a noisy version of the actual destination point.

\begin{figure*}[ht]
\begin{center}
\includegraphics[width=\textwidth]{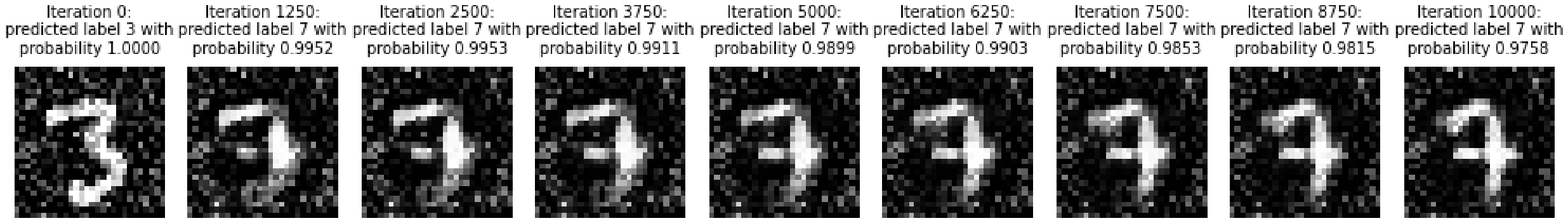}
\hfill \vspace*{-1em}
\includegraphics[width=\textwidth]{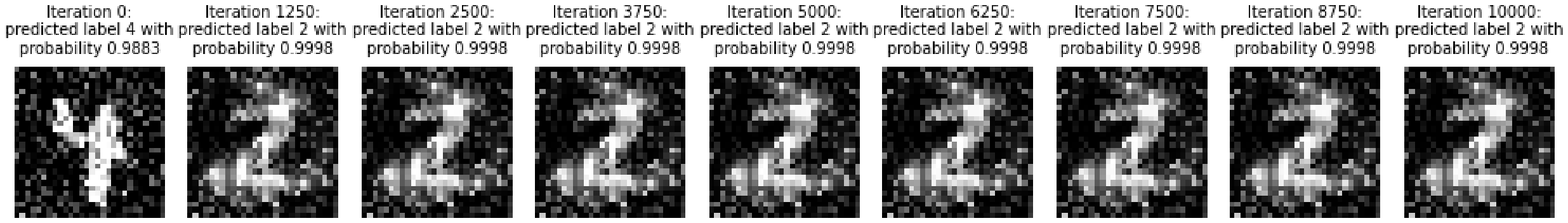}
\end{center}
\vskip -0.1in
\caption{Horizontal paths unable to reach a valid image in MNIST test set from a noisy image.}
\label{fig:noise_horizontal}
\vskip -0.1in
\end{figure*}

All those experiments confirm that there exists a model-centric data manifold, the data leaf of the foliation. In addition, they exhibit that the leaves in the foliation of the data domain are characterized by the noise content.

\section{Related Works}

\par{\bf Fisher Information Matrix}.
In \citet{as} the authors discuss the information content of the Fisher matrix,
and they show that such content changes during the training of
a neural network, decreasing rapidly towards the end of it.
This shows that the Fisher-Rao metric acquires importance during the
training phase only (see also \citet{kirk}). More on the geometry
of the parameter space, as a Riemannian manifold, is found in 
\citet{bronstein}.
In this paper, we take
an analog of the Fisher-Rao metric, but on the data domain. 
In our examples, we see a phenomenon similar to the one observed in \citet{as}: the trace of
the local data matrix decreases rapidly, as the model completes its
training. Other metrics on datasets were suggested, for example see
\citet{montufar} and refs. within, but with different purposes.
Here, our philosophy is the same as in \citet{frosini}: we believe
that data itself is not equipped with a geometric structure, but such
structure emerges only when the model views data, with a given classification
task.

\par{\bf Intrinsic Dimension}.
\citet{ansuini2019intrinsic} measure the intrinsic dimension of layer representations
for many common neural network architecture.
At the same time, they measure the intrinsic dimension of MNIST, CIFAR-10 \citep{krizhevsky2010cifar}
and ImageNet \citep{deng2009imagenet} datasets.
Our objective is similar, but we do not specifically quantify the dimension of the data manifold.
The data leaf reflects how a classifier sees a geometric structure on the discrete data points
from a dataset. Its dimension is intimately linked to the classification task.
For this reason the results are not directly comparable.

\citet{ansuini2019intrinsic} show that MNIST and neural networks trained on it
behave very differently from networks trained on CIFAR-10 or ImageNet.
While our experiments focus on MNIST, additional experiments on CIFAR-10 are shown in the Appendix \ref{app-sec3}.

\par{\bf Adversarial Attacks}.
Our method to navigate the leaves of the foliation is very similar to common adversarial
attack methods, like Fast Gradient Sign Method \citep{fgsm} or Projected Gradient Descent.
Adversarial attacks and our navigation algorithm both rely on gradients
$\nabla_x \log p_i$, but adversarial generation algorithms perturb the original
image by $\sign(\nabla_x \log p_i)$.
In general, $\sign(\nabla_x \log p_i) \notin \ker(G(x,w))^\perp$, so adversarial examples
are created perturbing the image outside the data leaf.

\section{Conclusions} 
In this paper, we introduce the local data matrix, a novel mathematical object that sheds light on the internal working of a neural network classifier. We prove that the model organizes the data domain according to the geometric structure of a foliation. Experiments show that valid data are placed on the same leaf of the foliation, thus the model sees the data on a low-dimensional submanifold of the data domain.
Such submanifold appears more general than the model itself, because it includes meaningless, but visually similar, images together with training and test data.

In the future, we aim to characterize the data leaf and to study the Riemannian metric given by the local data matrix. If we could analytically characterize the leaves of the foliation by the degree of noise, we could distinguish noisy data from valid examples.
That can be used in the inference phase to exclude examples on which model predictions are not reliable.
Furthermore, it could pave the way to the creation of novel generic denoising algorithms applicable to every kind of data.

\bibliography{modelcentric}
\bibliographystyle{icml2021}

\newpage
\onecolumn

\appendix
\section{Appendix: Frobenius Theorem}\label{app-sec1}

For the reader's convenience, we collect here few facts regarding
differentiable manifolds, for more details see \citet{amr}, \citet{tu}.

Let $M$ be a differentiable manifold. Its tangent space $T_xM$ 
at a point $x \in M$ can be effectively defined as the vector
space of derivations at $x$, so $ T_xM=\{ \sum_i a^i \partial_i\}$, after
we choose a chart around $x$.
A \textit{vector field} $X$ on $M$ is a function
$x \mapsto X_x \in T_xM$ 
assigning to any point of $x \in M$ a tangent vector $X_x$. 
An \textit{integral curve} of the vector field $X$ at $p$ is a map 
$c:(-\epsilon,\epsilon) \lra M$, such that $c(0)=p$ and
$\frac{d}{dt}c(t)=X_{c(t)}$, 
for all $t \in (-\epsilon,\epsilon)$, $\epsilon \in \R$.

For example, in $\R^2$, we can define the vector field:
$$
X:(x,y) \mapsto -y\partial_x+x\partial_y
$$
One may also view $X$ as assigning to a point $p=(x,y)$ in $\R^2$
the vector $(-y,x)$ applied at $p$. 
The integral curve of $X$ at $(1,0)$ is the circle $c(t)=(cos(t),sin(t))$,
$t \in (-\pi,\pi)$, in fact
$c'(t)=(-sin(t), cos(t))=X_{c(t)}$. 
More in general, the integral curve of $X$ at
a generic point $p$ 
is the circle centered at the origin and passing through $p$.

\begin{figure}[h!]
\begin{center}
\centerline{\includegraphics[width=1.5in]{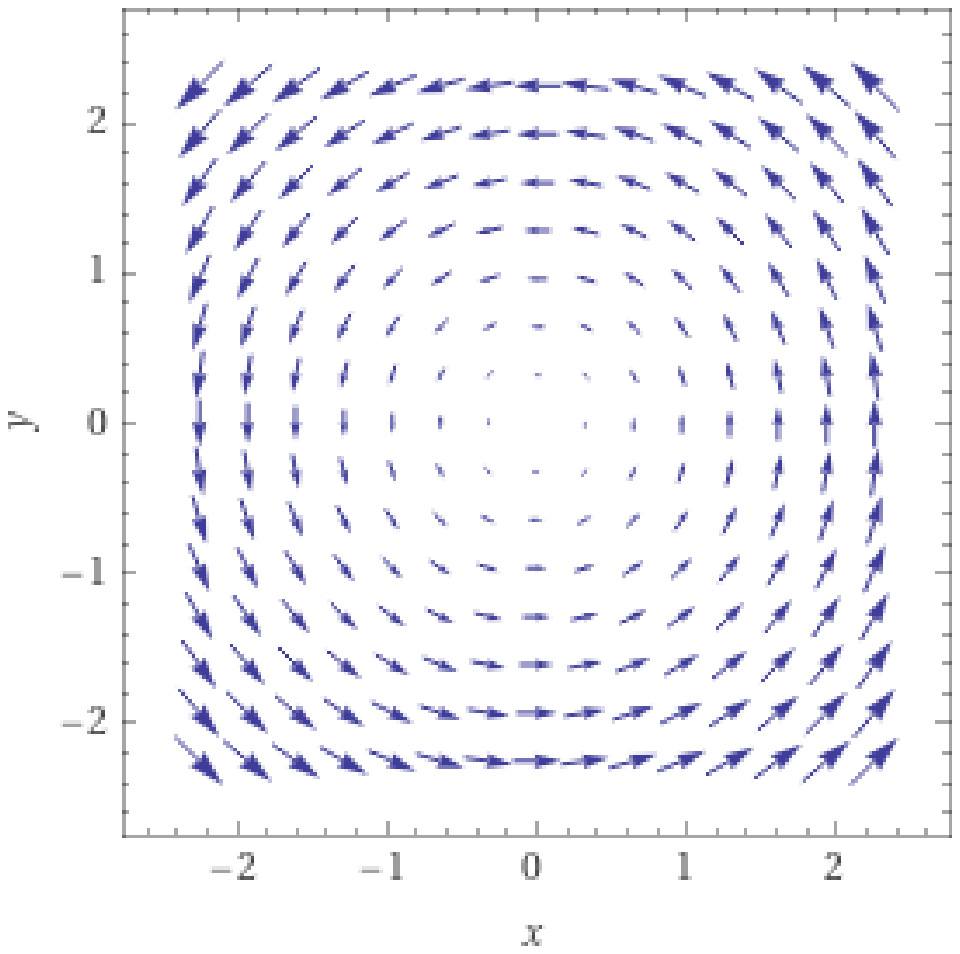}\qquad
\includegraphics[width=1.5in]{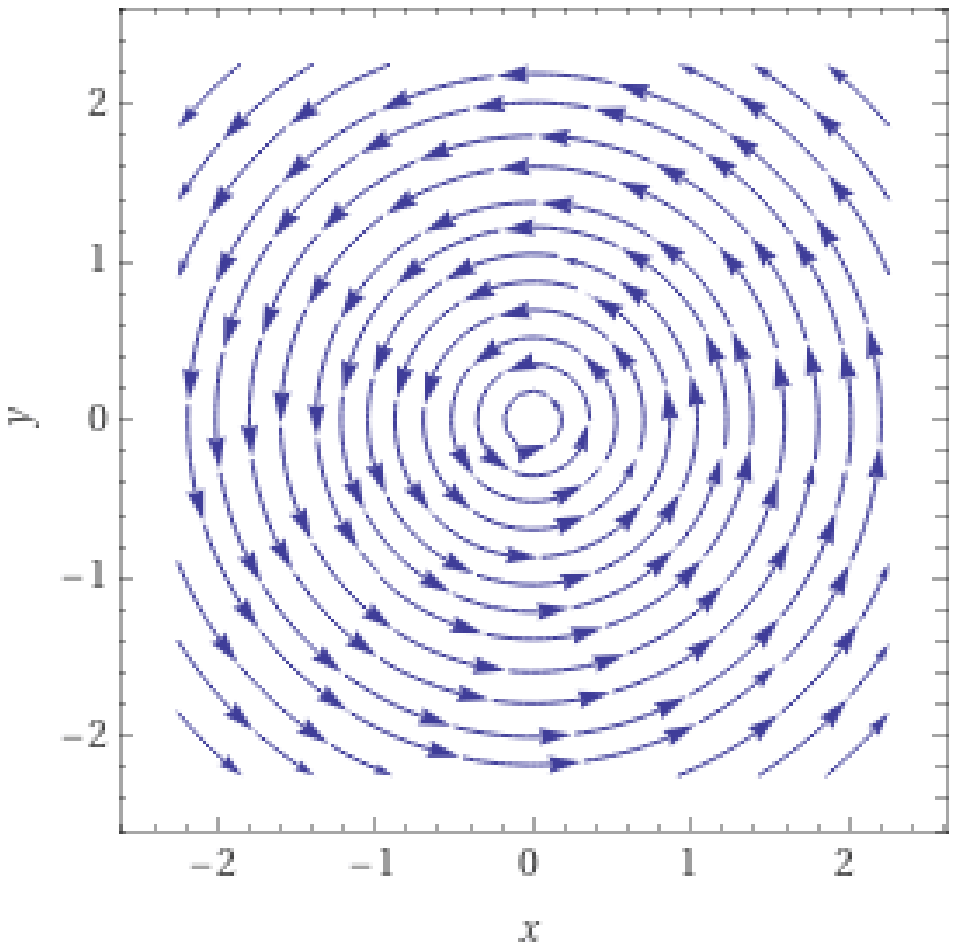}}
\caption{Vector field $X=-y\partial_x+x\partial_y$ in $\R^2$
and its integral curves.}
\label{app-fig1}
\end{center}
\vskip -0.2in
\end{figure}

If $TM=\coprod_{x \in M} T_xM$ (disjoint union) 
denotes the \textit{tangent bundle}, 
we also call the vector field $X$ a \textit{section} of $TM$.

Since 
$T_xM$ is the vector space of derivations on differentiable functions 
defined on
a neighbourhood of $x$, we can write $X_x(f)$; this is the result
of the application of the derivation $X_x \in T_xM$ to the function 
$f \in C^\infty_M(U)$, where $C^\infty_M(U)$ denotes the differentiable
functions $f:U \lra \R$, $x \in U$ open in $M$. As $x$ varies in $U$,
$X_x(f)$ defines a function, which we denote with $X(f)$.

We call $\chi(M)$ the \textit{smooth} vector fields on $M$,
that is, those vector fields expressed in local coordinates
as $X_x=\sum_i a_i(x) \partial_i$, with $a_i:U \lra \R$ smooth
functions, $x \in U$, $U$ open in $M$. 

We can define a \textit{bracket} on the space of vector fields 
as follows:
$$
[X,Y](f):=X(Y(f))-Y(X(f)), \qquad f \in C^\infty_M(M), \quad X,Y \in \chi(M)
$$
$[\,,\,]$ is bilinear and satisfies:\\
1. $[X,Y]=-[Y,X]$ (antisymmetry);\\
2. $[X,[Y,Z]]+[Y,[Z,X]]+[Z,[X,Y]]=0$ (Jacoby identity).


We are ready to define {\sl distributions}, which play a key
role into our treatment. \\
Let $M$ be a differentiable manifold. We define a \textit{distribution}
$\cD$ of rank $k$ an assignment:
$$
x \lra \cD_x \subset T_xM, \qquad \forall x \in M
$$
where 
$\cD_x$ is a subspace of $T_xM$ of dimension $k$.
We say that $\cD$ is \textit{smooth} if at each point $x \in M$, there
exists an open set $U$, $x \in U$, $\cD_y$ is spanned by
smooth vector fields on $U$ for all $y \in U$.


We say that a distribution is \textit{integrable} if for any $x \in M$
there exists a local submanifold $N \subset U$, $U$ open in $M$, 
$x \in N$, called a 
\textit{local integrable manifold} such that $T_zN=\cD_z$ for all $z \in N$.
When such $N$ exists globally, we say we have a \textit{foliation} of $M$,
that is we can write $M$ as the disjoint union of integrable submanifolds,
all of the same dimension and immersed in $M$. Each integrable
submanifold of the foliation is called a \textit{leaf}.
For example in $\R^3\setminus\{(0,0,0)\}$, 
the distribution:
\beq\label{distr-sphere}
p=(x,y,z) \mapsto \cD_{p}= (\Span\{x\partial_x+y\partial_y+
z\partial_z\})^\perp
\eeq
is integrable: at each point $p$, $\cD_p$ is the tangent plane to
a sphere centered at the origin and passing through $p$.
Hence we can write 
the space $\R^3 \setminus\{(0,0,0)\}$ 
as the disjoint union of spheres: each sphere is
a leaf of the foliation thus obtained.
We may also say that $\R^3 \setminus \{(0,0,0)\}$
\textit{foliates} as the disjoint union of spheres.
Another example is given in Figure \ref{app-fig1}, showing
how the distribution generated by the vector field $X$ foliates $\R^2 \setminus \{(0,0)\}$ as the disjoint union of circles.

We say that a distribution is \textit{involutive} if for all 
vector fields $X,Y \in \cD$ we have $[X,Y] \in \cD$. 

Frobenius Theorem establishes an equivalence between these two key properties
of a distribution, namely integrability and involutivity, thus
giving us an effective method to establish when a distribution gives
a foliation of $M$ into disjoint submanifolds.

\begin{theorem} {\bf Frobenius Theorem}
Let $M$ be a differentiable manifold and $\cD$ a distribution on $M$.
Then $\cD$ is involutive if and only if it is integrable. If
this occurs, then there exists a foliation on $M$, whose leaves are given by 
the integrable submanifolds of $\cD$.
\end{theorem}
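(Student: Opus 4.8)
The plan is to prove the two implications separately and then build the global foliation. I would first dispatch the easy direction, integrability $\Rightarrow$ involutivity. Given a local integral submanifold $N$ through $x$ with $T_zN=\cD_z$ for $z\in N$, any $X,Y\in\cD$ restrict to vector fields tangent to $N$, so their bracket $[X,Y]$ is again tangent to $N$; hence $[X,Y]_z\in T_zN=\cD_z$ for every $z\in N$, and since $x$ was arbitrary this shows $[X,Y]\in\cD$. The only point needing justification is that a vector field lying in $\cD$ restricts to a genuine vector field on $N$ and that the bracket of two vector fields tangent to an (immersed) submanifold is tangent to it, which is the standard naturality of the Lie bracket under the inclusion map.

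The substance is the reverse implication, involutivity $\Rightarrow$ integrability, and here the key step is to produce, near any point $x$, a \emph{commuting} local frame for $\cD$. I would start from an arbitrary smooth frame $X_1,\ldots,X_k$ of $\cD$ on a neighbourhood of $x$ and pick coordinates $(x^1,\ldots,x^n)$; after permuting coordinates the leading $k\times k$ block of the coefficient matrix is invertible near $x$, so multiplying the frame by its inverse yields a new frame of the echelon form $Z_i=\partial_i+\sum_{l>k}b_i^l\,\partial_l$ for $i=1,\ldots,k$. A direct computation then shows that $[Z_i,Z_j]$ has no component along $\partial_1,\ldots,\partial_k$, so it lies in $\Span\{\partial_l:l>k\}$. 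Now involutivity enters: $[Z_i,Z_j]\in\cD=\Span\{Z_1,\ldots,Z_k\}$, and because every combination $\sum_i c^iZ_i$ has $\partial_m$-component $c^m$ for $m\le k$, the only element of $\cD$ lying in $\Span\{\partial_l:l>k\}$ is zero; hence $[Z_i,Z_j]=0$ for all $i,j$.

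With a commuting frame in hand I would invoke the simultaneous flow-box theorem: since the flows $\phi^i_{t}$ of the $Z_i$ pairwise commute, the map $(t_1,\ldots,t_k)\mapsto\phi^1_{t_1}\circ\cdots\circ\phi^k_{t_k}(x)$, completed by $n-k$ transverse coordinates, is a diffeomorphism onto a neighbourhood of $x$ that straightens the frame to $\partial/\partial y^1,\ldots,\partial/\partial y^k$. In these adapted coordinates $\cD$ is spanned by the first $k$ coordinate fields, so the slices $\{y^{k+1}=c^{k+1},\ldots,y^n=c^n\}$ are local integral submanifolds with $T_zN=\cD_z$, giving integrability. I expect this simultaneous straightening to be the main obstacle: it is precisely here that the analytic content lives, because producing the adapted chart amounts to integrating an overdetermined system of ODEs whose consistency is guaranteed exactly by the vanishing brackets (the commutativity of the flows is the geometric form of the Frobenius condition).

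Finally, to obtain the foliation I would globalize the local picture. Overlapping adapted charts carry slices to slices, since in each chart the plaques have tangent space $\cD$ and hence sit inside a single slice of the other chart, so the plaques glue consistently. Declaring $x\sim x'$ when they are joined by a piecewise-smooth path everywhere tangent to $\cD$ partitions $M$ into equivalence classes; using the adapted charts one endows each class with the manifold topology in which it is a connected immersed submanifold with tangent space $\cD$ at every point. These maximal integral submanifolds are the leaves, and the slice decomposition in each adapted chart shows they fit together into a foliation of $M$, completing the proof.
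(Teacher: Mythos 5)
Your proposal is correct and follows essentially the same route as the paper: the paper defers the proof to \citet{amr} Section 4.4 but sketches exactly your strategy in its appendix discussion --- use Gauss reduction to turn an arbitrary local frame of an involutive $\cD$ into an echelon-form frame with vanishing brackets, then take the integral curves (flows) of the commuting fields as adapted coordinates whose slices are the local integral manifolds. Your write-up fleshes out that sketch with the standard details (naturality of the Lie bracket for the easy direction, the $\partial_m$-component argument forcing $[Z_i,Z_j]=0$, the simultaneous flow-box, and plaque gluing for the global foliation), all of which are sound.
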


\begin{proof} See \citet{amr} Section 4.4. \end{proof}

Frobenius Theorem
tells us that an involutive distribution on $M$ defines at each
point $p$ a submanifold $N$ of dimension $k$ of $M$. This means that, locally,
we can choose coordinates $(x_1, \dots, x_n)$ for $M$, so that,
in the chart neighbourhood of $p$, the submanifold $N$ is determined by 
equations $x_{k+1}=c_{k+1},\dots,x_n=c_n$, where $c_i$ are constants. We have then that
$(x_1, \dots, x_k)$ are local coordinates for the submanifold $N$ around $p$.
Notice that the vector fields associated to these particular coordinates $X_i=\partial_{x_i}$ verify a stronger
condition than involutivity, namely $[X_i,X_j]=0$. 
Furthermore, the proof of Frobenius
theorem is constructive: it will give us the vector fields $X_i$ and
the coordinates $x_i$ as their integral curves.
Let us see in a significant example, what this construction amounts to.

We express the distribution $\cD$ defined 
in (\ref{distr-sphere}) explicitly as:
\beq\label{distr-sphere2}
\begin{array}{rl}
p=(x,y,z) \mapsto \cD_{p}&= (\Span\{x\partial_x+y\partial_y+
z\partial_z\})^\perp= \\ \\ &=\Span\{X=y\partial_x-x\partial_y,
Y=z\partial_x-x\partial_z\}
\end{array}
\eeq
As one can check $[X,Y]_p=-y\partial_z+z\partial_y 
\in \cD_{p}$ for all $p
\in \R^3 \setminus\{(0,0,0)\}$. With Gauss reduction we transform
$$
\begin{pmatrix} y & -x & 0\\
z & 0 & -x \end{pmatrix} \lra 
\begin{pmatrix} 1 & 0 & -x/z\\
0 & 1 & -y/z \end{pmatrix}
$$
One can then verify that $[X_1,X_2]=0$ for
$X_1=\partial_x-(x/z)\partial_z$, $X_2=\partial_y-(y/z)\partial_z$.

The calculations of this example are indeed a prototype for the treatment in the general setting: given a local basis for an involutive $\cD$, that is a set of vector fields $X_1, \dots X_k$, which are linearly independent in a chart neighbourhood, using Gauss algorithm, we can transform them into another basis $Y_1 \dots Y_k$
with the property $[Y_i,Y_j]=0$.
The integral curves of the $Y_i$'s will then give us the coordinates
for the leaf manifold in the given chart. 

\newpage

\section{Appendix: Proofs}\label{app-sec2}

In this appendix we collect the proofs of the mathematical results stated
in our paper. Recall the two key definitions of local Fisher and data matrices:
\begin{align*}
& F(x,w) = \bE_{y\sim p}[\nabla_w\log p(y|x,w) \cdot (\nabla_w \log p(y|x,w))^T] \\ 
& G(x,w) = \bE_{y\sim p}[\nabla_x\log p(y|x,w) \cdot (\nabla_x \log p(y|x,w))^T].
\end{align*}

We start with the statement and proof of Proposition \ref{prop1}.

\begin{proposition} \label{prop1-app2}
Let the notation be as in Section \ref{ig-sec}.
Then:
\begin{enumerate}
\item $F(x, w)$ and $G(x, w)$ are positive semidefinite symmetric matrices.

\item $\ker F(x,w)= (\Span_{i=1, \ldots, C}\{
\nabla_w \log p_i(y|x,w)\})^\perp$;\\
$\ker G(x,w)= (\Span_{i=1, \ldots, C}\{
\nabla_x \log p_i(y|x,w)\})^\perp$.

\item $\rank\ F(x,w) < C$, \quad $\rank\ G(x,w) < C$.
\end{enumerate}
\end{proposition}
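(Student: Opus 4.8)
Both parts of the proposition are symmetric in the roles of $x$ and $w$, so the plan is to prove everything for $G(x,w)$; the argument for $F(x,w)$ is word-for-word identical after replacing $\nabla_x$ by $\nabla_w$. First I would expand the expectation over the $C$ labels as
$$G(x,w) = \sum_{i=1}^C p_i(y|x,w)\, \nabla_x\log p_i(y|x,w)\,(\nabla_x\log p_i(y|x,w))^T,$$
exhibiting $G$ as a nonnegative combination of rank-one outer products $v_i v_i^T$, where $v_i := \nabla_x \log p_i(y|x,w)$. From this form, part 1 is immediate: each $v_i v_i^T$ is symmetric, and for any $u \in \R^n$ one has $u^T G u = \sum_i p_i (u^T v_i)^2 \ge 0$, since the softmax probabilities $p_i$ are nonnegative.

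For part 2, I would use the standard fact that a positive semidefinite matrix sends to zero exactly those vectors on which its quadratic form vanishes. Hence $u \in \ker G$ iff $u^T G u = \sum_i p_i (u^T v_i)^2 = 0$. As softmax gives strictly positive $p_i$, this sum of nonnegative terms vanishes precisely when $u^T v_i = 0$ for every $i$, i.e. when $u \perp \Span_i\{v_i\}$. This is exactly the claim $\ker G = (\Span_i\{v_i\})^\perp$.

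For part 3, part 2 gives $\rank G = \dim(\ker G)^\perp = \dim \Span_i\{v_i\} \le C$, since there are only $C$ spanning vectors. To upgrade this to the strict bound, I would invoke the linear dependence already recorded in equation (\ref{eq:expected_value}): differentiating the normalization $\sum_i p_i = 1$ yields $\sum_i p_i v_i = \sum_i \nabla_x p_i = \nabla_x 1 = 0$. With all $p_i > 0$ this is a nontrivial relation among the $v_i$, so they span a subspace of dimension at most $C-1$, whence $\rank G < C$.

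The only real subtlety lies in part 3: the outer-product decomposition by itself yields merely $\rank G \le C$, and the strict inequality depends on recognizing that the probability constraint $\sum_i p_i = 1$ forces a linear relation among the gradient vectors — precisely the content of the earlier expected-value computation. Everything else is routine linear algebra once $G$ is written as a weighted sum of outer products.
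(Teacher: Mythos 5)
Your proof is correct and takes essentially the same approach as the paper's: you exhibit $G(x,w)=\sum_{i=1}^C p_i\, v_i v_i^T$ as a nonnegative combination of outer products, read off symmetry, positive semidefiniteness and the kernel from the quadratic form, and obtain the strict bound $\rank\ G(x,w)<C$ from the linear relation $\sum_i p_i v_i=0$ given by equation (\ref{eq:expected_value}), exactly as the paper does. The only (harmless) differences are that you get both inclusions of part 2 in one step via the standard fact that a positive semidefinite matrix annihilates precisely the vectors where its quadratic form vanishes, and that you state explicitly the positivity $p_i>0$ of the softmax outputs, which the paper's proof uses implicitly in the same place.
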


\begin{proof}
We prove the results for $F(x, w)$, the proofs for $G(x,w)$ are akin.\\
\textit{1.} It is immediate to see that $F(x, w)$ is symmetric because it
is a weighted sum of symmetric matrices.
To see that it is positive semidefinite we check that $u^T F(x,w) u \geq 0 \ \forall u \in \R^n$.
\begin{align}
u^T F(x,w) u =
\bE_{y\sim p}\left[u^T \nabla_w\log p(y|x,w) (\nabla_w \log p(y|x,w))^T u\right] =
\bE_{y\sim p}\left[\langle \nabla_{w}\log p(y|x, w), u \rangle^{2} \right] \geq 0.
\begin{split}
\end{split}
\end{align}

\textit{2.} We show that $\ker F(x, w) \subseteq
(\Span_{i=1, \ldots, C}\{\nabla_w \log p_i(y|x,w)\})^\perp$:
\begin{align}
\begin{split}
u \in \ker F(x, w) &\Rightarrow u^{T}F(x, w)u = 0
\Rightarrow 
\bE_{y\sim p}\left[\langle \nabla_{w}\log p(y|x, w), u \rangle^{2} \right] = 0 \\
&\Rightarrow 
\langle \nabla_{w} \log p_i(y|x, w), u \rangle = 0 \quad \forall\ i=1,\ldots, C.
\end{split}
\end{align}
On the other hand, if $u \in (\Span_{i=1, \ldots, C}\{
\nabla_w \log p_i(y|x,w)\})^\perp$ then $u \in \ker F(x, w)$:
\begin{equation}
F(x, w)u = \bE_{y\sim p}\left[ \nabla_{w}\log p_i(y|x, w)
\langle \nabla_{w}\log p_i(y|x, w), u \rangle \right] = 0.
\end{equation}
\textit{3.} From \ref{prop1}.\ref{thm:ker}, we know that $\rank\ F(x,w) \leq C$.
Equation (\ref{eq:expected_value}) tells us that the vectors $\nabla_{w}\log p_i(y|x, w)$ are linearly dependent, thus 
$\rank\ F(x,w) < C$.
\end{proof}

We now go to the main mathematical result of our paper, which is
the key to provide with a manifold structure the space of data.

\begin{theorem}
Let $w$ be the weights of a deep ReLU neural network classifier,
$p$ given by softmax, $G(x,w)$ the local data matrix. Assume $G(x,w)$ has constant
rank, $x$ generic. 
Then, there exists a local submanifold $N \subset \R^n$, $x \in N$,
such that its tangent space at $z$, $T_z N=(\ker G(z,w))^\perp$ 
for all $z \in N$. 
\end{theorem}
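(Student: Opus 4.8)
The plan is to establish the \emph{involutivity} of the distribution $\cD_x = (\ker G(x,w))^\perp$ near a generic point and then invoke Frobenius Theorem (as recalled above) to produce the local integral submanifold $N$. By Proposition~\ref{prop1-app2} we may replace $\cD_x$ with its explicit generating set,
\[
\cD_x = \Span_{i=1,\ldots,C}\{\nabla_x \log p_i(y|x,w)\},
\]
so the whole argument reduces to understanding these gradient vector fields for a softmax classifier built on a ReLU score function $s(x,w)$.

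First I would exploit the piecewise-linear structure. For a deep ReLU network the score $s(\cdot,w)$ is continuous and piecewise affine in $x$; away from the (measure-zero) union of faces where some activation switches, i.e. for generic $x$, there is a neighbourhood on which each component $s_i(\cdot,w)$ is affine, so that $\nabla_x s_i(x,w) =: a_i$ is \emph{constant} on this neighbourhood. Writing $\log p_i = s_i - \log\sum_j e^{s_j}$ and differentiating gives
\[
\nabla_x \log p_i(y|x,w) = a_i - \sum_{j=1}^C p_j(y|x,w)\, a_j,
\]
where only the softmax weights $p_j$ depend on $x$.

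The key step is to show that, despite this $x$-dependence, the span is \emph{constant} on the region. Using $\sum_j p_j = 1$ one has $\nabla_x\log p_i = \sum_j p_j (a_i - a_j)$, so every generator lies in $\Span\{a_i - a_k\}$; conversely $\nabla_x\log p_i - \nabla_x\log p_k = a_i - a_k$ gives the reverse inclusion. Hence
\[
\cD_x = \Span\{\,a_i - a_k : 1\le i,k\le C\,\} = \Span\{\,a_i - a_C : 1\le i \le C-1\,\},
\]
a fixed subspace $V \subseteq \R^n$ throughout the linear region, of dimension $\rank\ G(x,w)$ (well-defined by the constant-rank hypothesis). A constant distribution is automatically involutive: picking a constant frame $e_1,\ldots,e_k$ of $V$ one has $[e_i,e_j]=0$, and for arbitrary sections $X=\sum_i f_i e_i$, $Y=\sum_j g_j e_j$ a direct computation yields $[X,Y]=\sum_j (Xg_j)\,e_j - \sum_i (Yf_i)\,e_i \in V$.

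With involutivity in hand, Frobenius Theorem produces a local submanifold $N\ni x$ with $T_zN=\cD_z=(\ker G(z,w))^\perp$ for all $z\in N$, as claimed (its leaves are in fact affine pieces of $V$). The only genuine obstacle is the non-smoothness of ReLU: the computation above is valid only in the interior of a single linear region, which is precisely the content of the ``$x$ generic'' hypothesis. Since the statement is \emph{local}, shrinking the neighbourhood so that $N$ stays inside one region removes this difficulty, while the constant-rank assumption guarantees that the leaves obtained across different regions all share the same dimension $\rank\ G(x,w) < C$.
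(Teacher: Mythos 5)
Your proof is correct, but it takes a genuinely different route from the paper's. The paper verifies the involutivity hypothesis of Frobenius Theorem head-on: it computes the Lie bracket $[\nabla_x \log p_i, \nabla_x \log p_j]$ via Hessians, derives the softmax identity $\bH(\log p_i(y|x,w)) = -\sum_{k=1}^C \nabla_x p_k(y|x,w)\,(\nabla_x s_k(x,w))^T$ (using $\bH(s_k)=0$ where defined, the only place piecewise linearity enters), and observes that the bracket is then a linear combination of the generators $\nabla_x \log p_k$. You instead exploit piecewise linearity earlier and harder: on the interior of a single linear region $\nabla_x s_i = a_i$ is constant, and your two-sided inclusion argument ($\nabla_x \log p_i = \sum_j p_j (a_i - a_j)$ and conversely $a_i - a_k = \nabla_x\log p_i - \nabla_x\log p_k$) shows $\cD_x$ equals the \emph{fixed} subspace $\Span\{a_i - a_C : i=1,\dots,C-1\}$, so involutivity is trivial and the integral manifolds are explicit affine pieces --- Frobenius is barely needed. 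Your approach buys a sharper structural conclusion (the leaves are locally affine, hence globally piecewise affine, bending only across activation-region boundaries; it also shows the rank of $G(x,w)$ is automatically locally constant within each region, clarifying the role of the constant-rank hypothesis as a cross-region matching condition), whereas the paper's bracket computation keeps the softmax and the score function separated, making visible exactly which extra terms (those involving $\bH(s_k)$) would obstruct involutivity for a non-piecewise-linear network. Both proofs handle genericity identically, restricting to a neighbourhood avoiding the nondifferentiability locus; one small caution on your wording: for a deep network that locus is a union of piecewise-linear (bent) surfaces rather than literal hyperplanes, but this does not affect the local argument.
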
 

\begin{proof}
We first notice that the points in which the local data
matrix is non differentiable are a closed subset in $\R^n$, in fact they consist of a union of hyperplanes.
Since the result we want to prove is local, we may safely assume that our point $x$ has a neighbourhood where we can compute $\nabla_x \log p_i(y|x,w)$ and in particular $\nabla_x s_k(x, w)$.

By Frobenius Theorem, we need to check the involutivity
property (\ref{inv-prop}) for the
distribution $x\mapsto \cD_x=(\ker G(x,w))^\perp$ on $\R^n$. Since
by Prop. \ref{prop1}, 
$$
\cD_x=(\ker G(x,w))^\perp= \Span_{i=1, \ldots, C}\{
\nabla_x \log p_i(y|x,w)\}
$$
we only need to show that:
$$
[\nabla_x \log p_i(y|x,w), \nabla_x \log p_j(y|x,w)]\in
\Span_{k=1, \ldots, C}\{ \nabla_x \log p_k(y|x,w)\}
$$
By standard computations, we see that:
\beq\label{hess-0}
\begin{array}{rl}
[\nabla_x \log p_i(y|x,w), \nabla_x \log p_j(y|x,w)]&=
\bH(\log p_i(y|x,w))\nabla_x \log p_j(y|x,w)+ \\ \\
&-\bH(\log p_j(y|x,w))\nabla_x \log p_i(y|x,w),
\end{array}
\eeq
where $\bH(f)$ denotes the Hessian of a function $f$. Here we
are using the fact that
$$
[\sum_ia\partial_i,\sum_jb_j\partial_j]=\sum_{i,j}
(a_i\partial_ib_j-b_i\partial_ia_j)\partial_j
$$
With some calculations, we have:
\beq\label{hess-1}
\bH(\log p_i(y|x,w))=\frac{\bH(p_i(y|x,w))}{p_i(y|x,w)}-
\nabla_x \log p_i(y|x,w) \cdot (\nabla_x \log p_i(y|x,w))^T
\eeq
and
\beq\label{hess-2}
\nabla_x p_i(y|x,w)=\sum_{k=1}^C p_i(y|x,w)(\delta_{ik}-p_k(y|x,w)) \nabla_x s_k(x, w),
\eeq
where $s$ is the score function and we make use of the
fact $p_i(y|x,w)$ is given by softmax. Notice 
$\partial_{s_k}p_i(y|x,w)=p_i(y|x,w)(\delta_{ik}-p_k(y|x,w))$, where $\delta_{ik}$ is the Kronecker delta.
By (\ref{hess-1}) and (\ref{hess-2}):
\beq\label{hess-3}
\bH(p_i(y|x,w))=\mathrm{Jac}(\nabla_xp_i(y|x,w))=
\sum_{k=1}^C \nabla_x \left[p_i(y|x,w)(\delta_{ik}-p_k(y|x,w)) \right] (\nabla_x s_k(x, w))^T,
\eeq
where we use the fact that the Hessian of $s_k$ is zero, where it is defined. In fact, $s_k$ is piecewise
linear since the activation function ReLU is piecewise linear.
Hence:
\beq\label{hess-4}
\bH(p_i(y|x,w))=\frac{\nabla_xp_i(y|x,w)(\nabla_xp_i(y|x,w))^T}{
p_i(y|x,w)}-p_i(y|x,w)\sum_{k=1}^C \nabla_xp_k(y|x,w) (\nabla_xs_k(x, w))^T.
\eeq
Now, in view of (\ref{hess-0}), using (\ref{hess-1}) and (\ref{hess-4})
we compute the expression: 
\begin{align}
\begin{split}
\bH(\log p_i(y|x,w))&=
\frac{\nabla_xp_i(y|x,w)}{p_i(y|x,w)}
\left(\frac{\nabla_xp_i(y|x,w)}{p_i(y|x,w)}\right)^T-
\sum_{k=1}^C \nabla_x p_k(y|x,w) (\nabla_x s_k(x, w))^T+\\
&-\nabla_x \log p_i(y|x,w) \cdot (\nabla_x \log p_i(y|x,w))^T
=-\sum_{k=1}^C \nabla_x p_k(y|x,w) (\nabla_x s_k(x, w))^T.
\end{split}
\end{align}
Hence:
\begin{equation*}
\bH(\log p_i(y|x,w))\nabla_x \log p_j(y|x,w)=
-\sum_{k=1}^C \nabla_xp_k(y|x,w) (\nabla_x s_k(x, w))^T \nabla_x \log p_j(y|x,w).
\end{equation*}
Since $(\nabla_x s_k(x, w))^T \nabla_x \log p_j(y|x,w)$ is a scalar, we obtain that
$\bH(\log p_i(y|x,w))\nabla_x \log p_j(y|x,w)$ is a linear combination
of the vectors $\nabla_x p_k(y|x,w)$, and therefore a linear combination 
of the vectors $\nabla_x \log p_k(y|x,w)$.
The same holds for $\bH(\log p_j(y|x,w))\nabla_x \log p_i(y|x,w) $, then also $[\nabla_x \log p_i(y|x,w), \nabla_x \log p_j(y|x,w)]$ is a linear combination of vectors in $\cD_x$
as we wanted to show.
\end{proof}

\newpage
\section{Appendix: CIFAR-10 experiments}\label{app-sec3}

We perform some experiments with the CIFAR-10 dataset and a neural network similar to VGG-11.
The horizontal paths found are hard to interpret and they are very similar to interpolations.

\begin{figure}[ht]
\begin{center}
\centerline{\includegraphics[width=0.85\textwidth]{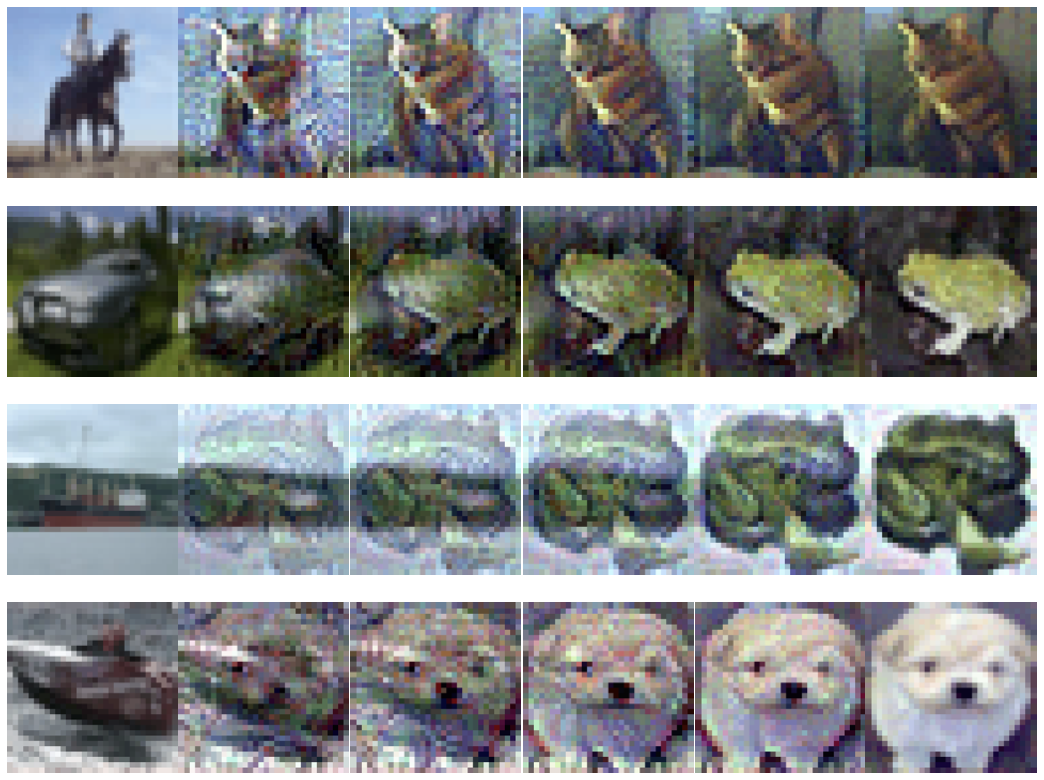}}
\caption{Horizontal paths between images in CIFAR-10 test set.}
\label{fig:cifar10}
\end{center}
\vskip -0.2in
\end{figure}

\end{document}